\documentclass[letterpaper, 10pt, journal, twoside]{IEEEtran}
\usepackage{amsmath,amsfonts,amssymb}
\usepackage{amsthm}
\usepackage{algorithmic}
\usepackage{algorithm}
\usepackage{array}
\usepackage{booktabs}
\usepackage[caption=false,font=footnotesize,labelfont=rm,textfont=rm]{subfig}
\usepackage{textcomp}
\usepackage{stfloats}
\usepackage{url}
\usepackage{verbatim}
\usepackage{graphicx}
\usepackage{subfloat}
\usepackage{hyperref}
\usepackage{bm}
\usepackage{multirow}
\usepackage{cite}
\usepackage{dsfont}
\usepackage{color}
\hypersetup{
	colorlinks=true,
	linkcolor=blue,
	filecolor=black,
	urlcolor=black,
	citecolor=blue,
}
\usepackage{balance}

\newtheorem{proposition}{Proposition}
\newtheorem{corollary}{Corollary}
\newtheorem{remark}{Remark}

\begin{document}

\title{CoDiMAD: Diffusion-Based Privileged Distillation for Communication-Free Multi-Robot Coordination}


\author{Jiyue Tao$^1$, Shunheng Xin$^1$, Tongsheng Shen$^2$, Dexin Zhao$^{2,*}$, and Feitian Zhang$^{1,*}$
	\thanks{J. Tao, S. Xin, and F. Zhang are with the Robotics and Control Laboratory, School of Advanced Manufacturing and Robotics, and the State Key Laboratory of Turbulence and Complex Systems, Peking University, Beijing, 100871, China (\href{mailto: jiyuetao@pku.edu.cn}{email: jiyuetao@pku.edu.cn}; \href{mailto: xin-shun-heng@pku.edu.cn}{email: shxin25@stu.pku.edu.cn}; \href{mailto: feitian@pku.edu.cn}{email: feitian@pku.edu.cn}).}

	\thanks{T. Shen and D. Zhao are with the National Innovation Institute of Defense Technology, Beijing 100071, China (\href{mailto: shents_bj@126.com}{email: shents\_bj@126.com}; \href{mailto: zhaodx2008@163.com}{email: zhaodx2008@163.com}).}

	\thanks{$^*$Send all correspondence to D.~Zhao and F.~Zhang.}
}

\maketitle
\markboth{}
{}

\begin{abstract}
Decentralized multi-robot coordination under partial observability remains challenging, especially in communication-free settings where agents must act solely from local sensor observations. Privileged policy distillation provides a promising approach by transferring knowledge from a globally informed oracle to sensor-constrained students. However, in multi-agent systems, the same local observation may correspond to multiple global configurations requiring qualitatively different cooperative actions, making the conditional action distribution inherently multi-modal. Standard deterministic distillation collapses these modes to their mean, often yielding invalid or hesitant actions. To address this issue, we propose CoDiMAD, a three-stage framework that trains a privileged oracle with MAPPO, constructs an offline dataset of local-observation-oracle-action pairs, and distills the oracle into decentralized students parameterized as conditional denoising diffusion probabilistic models. By approximating the conditional oracle-action distribution through the diffusion reverse process, CoDiMAD samples decisive actions from coherent coordination modes rather than averaging across them. Theoretical analysis characterizes the mode-averaging failure of deterministic distillation and the distributional recovery property of diffusion-based distillation. Experiments on three cooperative tasks show that CoDiMAD consistently outperforms direct local MARL and deterministic distillation baselines. The source code will be made publicly available upon acceptance.
\end{abstract}

\begin{IEEEkeywords}
	Multi-robot systems, imitation learning, reinforcement learning, multi-agent diffusion policy.
\end{IEEEkeywords}

\section{Introduction}
\label{sec:introduction}

\IEEEPARstart{M}{ulti-robot} coordination underpins a growing range of applications, such as environmental monitoring\cite{zhou2024coped}, search and rescue\cite{zhang2022h2gnn}, and cooperative manipulation\cite{zhang2024heterogeneous}, where teams of autonomous robots collectively accomplish objectives beyond individual capabilities. In practical deployments, scalability and communication constraints often necessitate \emph{decentralized execution}\cite{oliehoek2016concise}, where each agent acts based on its own sensor observations without relying on a central coordinator at runtime. The widely adopted \emph{Centralized Training Decentralized Execution} (CTDE) paradigm\cite{lowe2017multi, rashid2020monotonic,tao2026arboids} addresses this by leveraging global information during training, typically through a centralized critic, while constraining actor policies to local observations at deployment.

Yet even with centralized value guidance, actors trained end-to-end from local observations usually converge slowly and settle on suboptimal coordination strategies. A primary cause is partial observability\cite{oliehoek2016concise}, as the same local observation may be consistent with multiple global configurations, each requiring a different cooperative response. This ambiguity is further amplified under \emph{communication-free execution}, where robots cannot exchange observations to reduce uncertainty. Such constraints are particularly prevalent in marine robotics, where underwater vehicles suffer from severe acoustic latency and packet loss\cite{zhou2022underwater}, and surface vessels face intermittent, range-limited radio-frequency links\cite{an2023multirobot}.

A principled way to mitigate partial observability is privileged policy distillation\cite{vapnik2009new}. Instead of learning coordination directly under limited observations, one first trains an oracle policy with full state access, where the learning problem is substantially easier, and then transfers its behavior to sensor-constrained students via behavioral cloning (BC). This paradigm has achieved notable success in single-robot settings, including vision-based driving distilled from a map-equipped oracle\cite{chen2020learning} and proprioceptive locomotion distilled from a terrain-aware teacher\cite{kumar2021rma}.

Inspired by these successes, we extend privileged distillation to multi-robot coordination. However, this extension faces a fundamental challenge that is largely absent in single-robot tasks. In cooperative multi-agent systems, the oracle action for an individual robot may depend on the joint configuration of all agents and environment states beyond that robot's sensing range. Consequently, the same local observation can correspond to multiple distinct oracle actions, inducing an inherently \emph{multi-modal} conditional action distribution (Proposition~\ref{prop:info_asymmetry}). Standard deterministic policies cannot represent such distributions. Under common regression-based BC objectives such as mean squared error (MSE), they converge to the conditional mean, which may lie between distinct modes and correspond to no valid cooperative action (Corollary~\ref{cor:mode_averaging}), leading to hesitant or degenerate coordination.

To address this challenge, we propose \textbf{CoDiMAD} (\textbf{Co}operative \textbf{Di}ffusion-based \textbf{M}ulti-\textbf{A}gent \textbf{D}istillation), a three-stage framework that distills a privileged oracle into decentralized diffusion policies\cite{chi2025diffusion}. CoDiMAD first trains a MAPPO\cite{yu2022surprising} oracle with global-state access, then collects an offline dataset pairing local observations with oracle actions, and finally distills the oracle behavior into conditional denoising diffusion probabilistic models (DDPMs)\cite{ho2020denoising}. By modeling the conditional action distribution rather than a point estimate, the diffusion student avoids mode averaging and can sample decisive actions from coherent coordination modes. To overcome the high inference latency typical of diffusion models, we further integrate Denoising Diffusion Implicit Models (DDIM)\cite{song2020denoising} for accelerated, few-step sampling, thereby enabling real-time onboard deployment.

The main contributions of this work are as follows.
\begin{itemize}
	\item We propose CoDiMAD, a privileged distillation framework that resolves multi-modal ambiguity in partially observable multi-agent coordination through conditional diffusion modeling. To the best of our knowledge, this is the first work to integrate diffusion-based generative policies into multi-robot privileged distillation.

	\item We formally show that partial observability induces multi-modal action distributions under which regression-based distillation suffers from mode averaging, whereas diffusion-based generation approximates the conditional oracle action distribution.

	\item We validate CoDiMAD on three cooperative tasks, demonstrating consistent improvements over direct local MARL and deterministic distillation baselines, with analyses confirming its ability to recover multi-modal coordination behaviors.
\end{itemize}

\section{Related Work}
\label{sec:related}

\subsubsection{Multi-Agent Coordination}
The CTDE paradigm\cite{lowe2017multi,rashid2020monotonic,tao2026arboids} enables algorithms to use global information during training while executing with local observations. Many CTDE methods introduce learned communication channels\cite{sukhbaatar2016learning,foerster2016learning,das2019tarmac,jiang2018atoc} to exchange task-relevant information at execution time. However, they typically assume reliable communication, which may be unavailable in communication-free or bandwidth-limited robotics deployments. Other approaches pursue implicit coordination through shared conventions\cite{hu2021off} or heterogeneous policy architectures\cite{zhong2024harl}, but they still learn directly from partial observations and may therefore converge to suboptimal coordination strategies.

\subsubsection{Privileged Policy Distillation}
Privileged policy distillation transfers knowledge from a teacher policy with privileged information to a student policy constrained by realistic sensory inputs, and has proven effective in single-agent settings\cite{chen2020learning,kumar2021rma}. In multi-agent systems, Zhao~\emph{et al.}\cite{zhao2022ctds} introduced CTDS to distill centralized Q-values into decentralized student policies. More recently, Tang~\emph{et al.}\cite{tang2025kdmarl} investigated resource-aware distillation for resource-constrained agents, while Cho~\emph{et al.}\cite{cho2026interactive} proposed an interactive distillation framework to align student-teacher behaviors. However, these existing methods largely rely on deterministic regression objectives. Such objectives often reduce the multi-modal action distributions induced by partial observability to point estimates, thereby producing averaged, invalid coordination behaviors.

\subsubsection{Diffusion Models for Decision-Making}
Diffusion models have recently emerged as a powerful tool for decision-making, including single-agent trajectory planning\cite{janner2022diffuser} and visuomotor policy learning\cite{chi2025diffusion}, demonstrating strong capacity for modeling multi-modal behavior. In multi-agent settings, Negarmehr~\emph{et al.}\cite{negarmehr2025mimicd} trained decentralized diffusion policies via imitation learning under a CTDE paradigm to achieve implicit coordination on bimanual manipulation tasks. However, their approach relies heavily on high-quality, expert human demonstrations, which are difficult and expensive to scale to complex, dense coordination scenarios. In contrast, CoDiMAD leverages a privileged reinforcement learning oracle to generate scalable supervision, bypassing the need for human demonstrations while retaining the multi-modal modeling advantages of diffusion-based multi-agent distillation.

\section{Problem Formulation}
\label{sec:formulation}

\begin{table}[t]
	\centering
	\caption{Observation channels for the oracle and student policies.}
	\label{tab:obs_channels}
	\begin{tabular}{lcc}
		\toprule
		Channel & Oracle & Student \\
		\midrule
		\multicolumn{3}{l}{\textit{Shared (egocentric)}} \\
		\quad LiDAR occupancy grid & \checkmark & \checkmark \\
		\quad Ego-velocity field & \checkmark & \checkmark \\
		\midrule
		\multicolumn{3}{l}{\textit{Privileged (global)}} \\
		\quad Ego-position heatmap (arena-scale) & \checkmark & $\times$ \\
		\quad Teammate position map (arena-scale) & \checkmark & $\times$ \\
		\quad Task-state map (arena-scale) & \checkmark & $\times$ \\
		\midrule
		\multicolumn{3}{l}{\textit{Local-only (student)}} \\
		\quad Teammate position map (sensing range) & $\times$ & \checkmark \\
		\quad Task-state map (sensing range) & $\times$ & \checkmark \\
		\bottomrule
	\end{tabular}
\end{table}

\subsection{Dec-POMDP Formulation}
We model multi-robot coordination as a Decentralized Partially Observable Markov Decision Process (Dec-POMDP)\cite{oliehoek2016concise}, defined by the tuple $\langle N, \mathcal{S}, \{\mathcal{A}_i\}^N_{i=1}, \{\mathcal{O}_i\}^N_{i=1}, \mathcal{P}, \mathcal{R}, \gamma \rangle$. There are $N$ agents indexed by $i\in\{1,\ldots,N\}$, sharing a global state space $\mathcal{S}$ and a team reward $\mathcal{R}:\mathcal{S}\times\mathcal{A}\to\mathbb{R}$. Each agent $i$ selects actions from a continuous individual action space $\mathcal{A}_i$, which in our robot domains corresponds to a 2D velocity command $[v_x, v_y]$. The joint action space is $\mathcal{A}=\mathcal{A}_1\times\cdots\times\mathcal{A}_N$. The state transition function $\mathcal{P}: \mathcal{S} \times \mathcal{A} \to \Delta (\mathcal{S})$ maps a state and a joint action to a probability distribution over the next states. Each agent $i$ does not observe the global state $s_t \in \mathcal{S}$ directly, but instead receives only a local observation $\mathbf{o}_i^t \in \mathcal{O}_i$ and acts without direct communication. The objective is to learn decentralized policies $\pi_i(\mathbf{a}_i^t\,|\,\mathbf{o}_i^{1:t})$ that maximize the expected cumulative team return, $J(\pi) = \mathbb{E}\left[\sum_{t=0}^{H} \gamma^t \mathcal{R}(s_t, \mathbf{a}_t)\right]$, where $H$ is the episode horizon, $\mathbf{a}_t$ denotes the joint action, and $\gamma \in [0,1)$ denotes the discount factor.

\subsection{Observation Spaces and Information Asymmetry}
In this work, we instantiate the above formulation for a team of homogeneous robots. These agents share identical individual observation and action spaces, allowing us to implement decentralized policies using shared parameters. To represent spatial and task-relevant information in a unified format, all local and global observations are structured as multi-channel bird's-eye-view (BEV) grid maps with a spatial resolution of $32 \times 32$, as summarized in Table~\ref{tab:obs_channels}. The privileged oracle policy processes a dual-stream input $(\mathbf{o}_i^{\mathrm{loc}}, \mathbf{o}_i^{\mathrm{glo}})$. The local stream $\mathbf{o}_i^{\mathrm{loc}} \in \mathbb{R}^{2\times32\times32}$ consists of an egocentric LiDAR occupancy map and a velocity-field map. The privileged global stream $\mathbf{o}_i^{\mathrm{glo}} \in \mathbb{R}^{3\times32\times32}$ contains three channels that encode arena-scale global information, including the absolute positions of the self-agent, its teammates, and the full task-state map (e.g., global target locations). In contrast, the student policy operates under realistic sensing limitations and observes only a single-stream tensor $\mathbf{o}_i \in \mathbb{R}^{4\times32\times32}$, where the three global channels are replaced by two locally-restricted channels---a cropped task-state slice and teammate positions clipped to the agent's limited sensing radius, which yields a 4-channel input combined with the two local channels from $\mathbf{o}_i^{\mathrm{loc}}$. This information asymmetry is the core source of multi-modal uncertainty: the same $\mathbf{o}_i^t$ can be consistent with multiple global configurations, each potentially requiring a different coordination action.

\section{Methodology}
\label{sec:methodology}

\begin{figure*}[t]
	\centering
	\includegraphics[width=\textwidth]{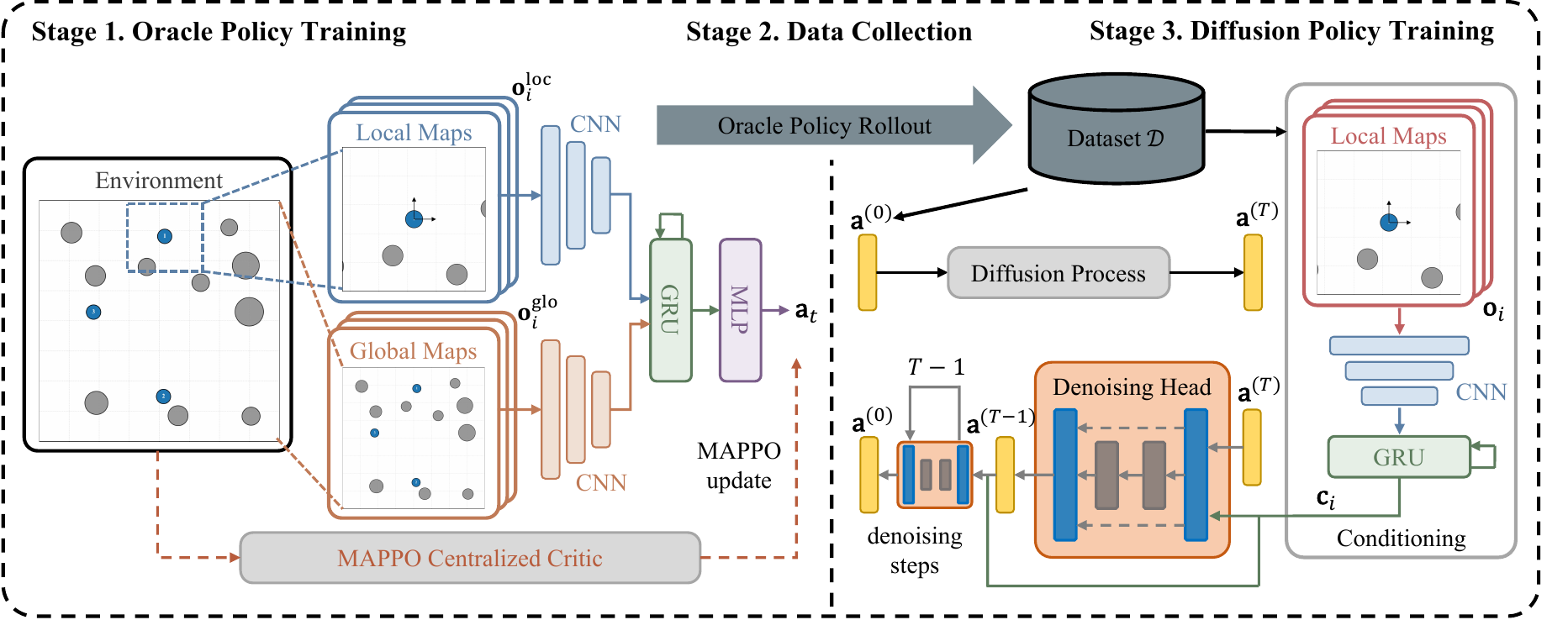}
	\caption{\textbf{Overview of the CoDiMAD framework.} (i) Stage~1 (Oracle Policy Training): The oracle actor encodes dual-stream inputs---local maps $\mathbf{o}_i^{\mathrm{loc}}$ and global maps $\mathbf{o}_i^{\mathrm{glo}}$---through parallel CNN branches, aggregates temporal features via a GRU, and produces actions $\mathbf{a}_t$ through an MLP head. A centralized MAPPO critic provides privileged value estimates during training. (ii) Stage~2 (Data Collection): The converged oracle policy is rolled out to populate an offline dataset $\mathcal{D}$ consisting of local-observation-action pairs. (iii) Stage~3 (Diffusion Policy Training): A diffusion-based student policy $\pi_\theta$ is trained on $\mathcal{D}$. A single-branch CNN--GRU encoder maps local observations $\mathbf{o}_i$ into a conditioning embedding $\mathbf{c}_i$. The denoising head $\hat{\boldsymbol{\epsilon}}_\theta$ iteratively refines Gaussian noise $\mathbf{a}^{(T)}$ into clean actions $\mathbf{a}^{(0)}$ over $T$ reverse diffusion steps conditioned on this embedding.}
	\label{fig:framework}
\end{figure*}

As illustrated in Fig.~\ref{fig:framework}, CoDiMAD is a three-stage oracle-student distillation framework.
In Stage~1, a shared oracle policy $\pi^O$ is trained via MAPPO with access to both local and privileged global observations. Stage~2 freezes $\pi^O$ and collects an offline dataset $\mathcal{D}$ of local-observation and oracle-action pairs from all agents instances and timesteps. In Stage~3, the oracle's behavior is distilled into a shared diffusion-based student policy $\pi_\theta$ conditioned solely on local observation history. By adopting a DDPM as the policy class, the student can sample from the conditional oracle-action distribution induced by partial observability, rather than regressing to a single point estimate---thereby circumventing the mode-averaging failure formalized in Corollary~\ref{cor:mode_averaging}.

\subsection{Stage 1: Oracle Policy Training via MAPPO}
We train the oracle policy $\pi^O$ following the CTDE paradigm using MAPPO. In our homogeneous robot domains, the oracle actor is shared across agents and is evaluated independently for each agent. The actor receives the dual-stream observation $(\mathbf{o}_i^{\mathrm{loc}}, \mathbf{o}_i^{\mathrm{glo}})$ comprising both local and global map inputs, while the centralized critic receives the joint state for value estimation. During deployment, only the actor is retained for offline data collection, while the critic is discard. As shown in Fig.~\ref{fig:framework}, the actor encodes $(\mathbf{o}_i^{\mathrm{loc}}, \mathbf{o}_i^{\mathrm{glo}})$ through a dual-branch convolutional encoder followed by a GRU\cite{cho2014learning} for temporal integration, and outputs a 2D velocity command $[v_x, v_y]$ via a diagonal Gaussian action head. The oracle policy is trained following the standard MAPPO configuration\cite{yu2022surprising}.

\subsection{Stage 2: Offline Trajectory Collection}
Once the oracle policy converges, it is frozen and executed \emph{deterministically} by taking the mean of its Gaussian action distribution, rather than sampling. For each rollout step, we record $(\mathbf{o}_i^t,\; \mathbf{a}_i^t,\; d_i^t)$ for all agents $i$ and timesteps $t$, where $\mathbf{o}_i^t$ is the student policy's local observation, $\mathbf{a}_i^t$ is the deterministic oracle action, and $d_i^t \in \{0,1\}$ is the episode termination indicator used for hidden-state reset. Importantly, $\mathbf{o}_i^t$ excludes the oracle's privileged global stream and contains only the information available to the student at deployment. All samples from different agent instances and timesteps are pooled into a single offline dataset $\mathcal{D}$ for training the shared student policy. To ensure high-quality supervision, we filter the collected oracle rollouts by discarding episodes that fail to complete the task or contain any collision, and retain only collision-free successful episodes in the offline dataset. Actions are then z-score normalized dimension-wise over the full dataset, with per-dimension statistics $(\boldsymbol{\mu}_a, \boldsymbol{\sigma}_a)$ stored as model buffers for denormalization at inference. 

\subsection{Stage 3: Diffusion-Based Behavioral Cloning}
\subsubsection{Architecture}
The student policy $\pi_\theta$ is a conditional denoising diffusion model comprising an observation encoder and a noise prediction network $\hat{\boldsymbol{\epsilon}}_\theta$. Unlike the oracle's dual-branch encoder, the student encoder employs a single-branch convolutional network followed by a GRU, operating exclusively on the local observation stream. At each timestep, the encoder receives the local observation $\mathbf{o}_i^t$, while the GRU maintains a recurrent hidden state that summarizes the history $\mathbf{o}_i^{1:t}$. The hidden state is reinitialized at episode boundaries and produces a conditioning embedding $\mathbf{c}_i^t$. The noise prediction network receives the noisy action $\mathbf{a}^{(k)}\in\mathbb{R}^2$, a sinusoidal embedding of the diffusion timestep $k$, and the conditioning embedding $\mathbf{c}_i^t$ as inputs. It comprises a cascade of residual MLP blocks, where the fused conditioning signal $(\mathbf{c}_i^t + \mathbf{t}_k)$ is injected into each block to preserve observation-dependent information throughout the network depth. The output is the predicted noise vector $\hat{\boldsymbol{\epsilon}}_\theta(\mathbf{a}^{(k)}, k, \mathbf{c}) \in \mathbb{R}^2$.

\subsubsection{Training Objective}
Let $\mathbf{a}^{(0)}$ denote the normalized oracle action. For notational simplicity, we omit the agent and time indices in the following diffusion equations. The forward diffusion process corrupts the clean action $\mathbf{a}^{(0)}$ by systematically adding Gaussian noise over $T=200$ timesteps according to a cosine noise schedule
\begin{equation}
    \mathbf{a}^{(k)} =
    \sqrt{\bar{\alpha}_k}\,\mathbf{a}^{(0)}
    + \sqrt{1-\bar{\alpha}_k}\,\boldsymbol{\epsilon},
    \quad
    \boldsymbol{\epsilon} \sim \mathcal{N}(\mathbf{0}, \mathbf{I}),
\end{equation}
where $k\in \{ 1,\dots,T \}$ indexes the diffusion step, and $\bar{\alpha}_k$ represents the noise schedule coefficient. To train the student policy $\pi_\theta$ to mimic the oracle's expertise under partial observability, we optimize the noise prediction network $\hat{\boldsymbol{\epsilon}}_\theta$ following the DDPM framework~\cite{ho2020denoising}. The network is trained to reconstruct the added noise $\boldsymbol{\epsilon}$ from the perturbed action $\mathbf{a}^{(k)}$ under the guidance of the conditioning context $\boldsymbol{c}$. The training loss is formulated with a per-timestep reweighting scheme
\begin{equation}
    \mathcal{L} =
    \mathbb{E}_{k,\,\boldsymbol{\epsilon},\,(\mathbf{a}^{(0)},\mathbf{o})\sim\mathcal{D}}
    \!\left[
    (1 - \bar{\alpha}_k)
    \left\|
    \boldsymbol{\epsilon}
    -
    \hat{\boldsymbol{\epsilon}}_\theta
    \big(\mathbf{a}^{(k)}, k, \mathbf{c}\big)
    \right\|^2
    \right].
    \label{eq:loss_diff}
\end{equation}
Here, $(1-\bar{\alpha}_k)$ downweights nearly clean samples and emphasizes high-noise regimes, where denoising requires stronger dependence on the observation-conditioned prior.

\subsubsection{DDIM Inference}
The full DDPM reverse process requires $T=200$ sequential denoising steps per action sample, which is costly for closed-loop robot control. Song et al.~\cite{song2020denoising} show that a non-Markovian reverse process can preserve the same training marginals as DDPM, yielding the deterministic DDIM sampler with substantially fewer denoising steps. We adopt DDIM with a uniformly sub-sampled schedule of $K=20$ steps, achieving a $10\times$ reduction in inference cost without retraining, since the same noise prediction network $\hat{\boldsymbol{\epsilon}}_\theta$ is used. Let $\{\tau_j\}_{j=0}^{K}$ denote the sub-sampled timestep sequence, where $\tau_0=0$ and $\tau_K=T$. Starting from $\mathbf{a}^{(\tau_K)}\sim\mathcal{N}(\mathbf{0},\mathbf{I})$, we iteratively compute, for $j=K,\ldots,1$,
\begin{equation}
\begin{split}
    \mathbf{a}^{(\tau_{j-1})} =\;&
    \sqrt{\bar{\alpha}_{\tau_{j-1}}}\,
    \frac{
    \mathbf{a}^{(\tau_j)}
    -
    \sqrt{1-\bar{\alpha}_{\tau_j}}\,
    \hat{\boldsymbol{\epsilon}}_\theta
    \big(\mathbf{a}^{(\tau_j)},\tau_j,\mathbf{c}\big)}
    {\sqrt{\bar{\alpha}_{\tau_j}}}
    \\
    &+
    \sqrt{1-\bar{\alpha}_{\tau_{j-1}}}\,
    \hat{\boldsymbol{\epsilon}}_\theta
    \big(\mathbf{a}^{(\tau_j)},\tau_j,\mathbf{c}\big).
\end{split}
\end{equation}
Although the DDIM update is deterministic conditioned on the initial noise, different initial noise samples can yield different action modes. The final normalized action $\mathbf{a}^{(0)}$ is denormalized using the stored statistics $(\boldsymbol{\mu}_a,\boldsymbol{\sigma}_a)$ to produce the velocity command. Algorithm~\ref{alg:codimad} summarizes the complete three-stage training procedure.

\begin{algorithm}[t]
    \caption{CoDiMAD: Oracle-Student Distillation}
    \label{alg:codimad}
    \begin{algorithmic}[1]
        \ENSURE Trained student policy $\pi_{\theta_{\mathrm{EMA}}}$

        \STATE \textbf{Stage~1:} \textit{Privileged Oracle Training}
        \STATE Train shared oracle actor $\pi^O$ and centralized critic $V$ via MAPPO using local and privileged global observations $(\mathbf{o}_i^{\mathrm{loc}}, \mathbf{o}_i^{\mathrm{glo}})$; freeze $\pi^O$

        \STATE \textbf{Stage~2:} \textit{Offline Dataset Collection}
        \STATE $\mathcal{D} \leftarrow \emptyset$
        \FOR{episode $e = 1, \ldots, E$}
            \STATE $\mathcal{B}_e \leftarrow \emptyset$
            \STATE Reset environment
            \FOR{$t = 0, 1, \ldots, H-1$}
                \FOR{agent $i = 1, \ldots, N$}
                    \STATE $\mathbf{a}_i^t \leftarrow \mu_{\pi^O}(\mathbf{o}_{i}^{t,\mathrm{loc}}, \mathbf{o}_{i}^{t,\mathrm{glo}})$
                \ENDFOR
                \STATE Step environment with joint action $\mathbf{a}^t=\{\mathbf{a}_i^t\}_{i=1}^N$ and obtain done flags $\{d_i^t\}_{i=1}^N$
                \FOR{agent $i = 1, \ldots, N$}
                    \STATE Store $(\mathbf{o}_{i}^{t,\mathrm{loc}},\mathbf{a}_i^t,d_i^t)$ into $\mathcal{B}_e$
                \ENDFOR
            \ENDFOR

            \IF{episode $e$ is successful and collision-free}
                \STATE $\mathcal{D} \leftarrow \mathcal{D} \cup \mathcal{B}_e$
            \ENDIF
        \ENDFOR
        \STATE Normalize: \(\mathbf{a} \leftarrow (\mathbf{a} - \boldsymbol{\mu}_a)\,/\,\boldsymbol{\sigma}_a,\;\forall (\cdot,\mathbf{a},\cdot) \in \mathcal{D}\)

        \STATE \textbf{Stage~3:} \textit{Diffusion Policy Distillation}
        \STATE Initialize student parameters $\theta$ and EMA copy $\theta_{\mathrm{EMA}}\leftarrow\theta$
        
		\REPEAT
            \FOR{mini-batch $(\mathbf{o}, \mathbf{a}^{(0)}, d) \sim \mathcal{D}$}
                \STATE $\mathbf{c} \leftarrow f_\theta(\mathbf{o};\mathbf{h})$ \hfill $\triangleright$ reset $\mathbf{h}$ when $d=1$
                \STATE Sample $k \sim \mathcal{U}\{1,\ldots,T\}$ and $\boldsymbol{\epsilon}\sim\mathcal{N}(\mathbf{0},\mathbf{I})$
                \STATE $\mathbf{a}^{(k)} \leftarrow \sqrt{\bar{\alpha}_k}\,\mathbf{a}^{(0)}+\sqrt{1-\bar{\alpha}_k}\,\boldsymbol{\epsilon}$
                \STATE $\mathcal{L}\leftarrow(1-\bar{\alpha}_k)\left\|\boldsymbol{\epsilon}-\hat{\boldsymbol{\epsilon}}_\theta(\mathbf{a}^{(k)},k,\mathbf{c})\right\|^2$
                \STATE Update $\theta$ by minimizing $\mathcal{L}$
                \STATE $\theta_{\mathrm{EMA}}\leftarrow\lambda\theta_{\mathrm{EMA}}+(1-\lambda)\theta$
            \ENDFOR
        \UNTIL{convergence}
        \RETURN $\pi_{\theta_{\mathrm{EMA}}}$
    \end{algorithmic}
\end{algorithm}

\subsection{Theoretical Analysis}
\label{sec:theoretical} 
The key difficulty in privileged multi-agent distillation is that the oracle conditions on global information, whereas the student only observes a local projection. Consequently, a single local observation may correspond to multiple global configurations and thus to multiple valid oracle actions. This turns oracle-to-student distillation into a conditional density estimation problem rather than a simple regression problem.

\begin{proposition}[Multi-modality from partial observability]
\label{prop:info_asymmetry} 
Let $\mu_{\pi^O}(s)$ denote the deterministic oracle action at global state $s$, and let $\mathbf{o}_i=h(s)$ be the local observation available to the student. Suppose that, conditioned on a local observation $\mathbf{o}_i$, the posterior over global states decomposes into $C\ge2$ configuration classes $\{\mathcal{S}_j\}_{j=1}^C$ with weights $w_j=p(s\in\mathcal{S}_j\mid \mathbf{o}_i)>0$ and $\sum_{j=1}^C w_j=1$. Assume that the oracle actions induced by states within each class are locally distributed as $\mathcal{N}(\boldsymbol{\mu}_j,\sigma_j^2\mathbf{I})$, and that the class means are well separated:
\[
    \Delta \triangleq \min_{j\neq l}
    \|\boldsymbol{\mu}_j-\boldsymbol{\mu}_l\|
    \gg
    \sigma_{\max}\triangleq\max_j\sigma_j .
\]
Then the conditional oracle-action distribution $p_{\mathcal{D}}(\mathbf{a}\,|\,\mathbf{o}_i)$ is multi-modal, with multiple separated high-density regions.
\end{proposition}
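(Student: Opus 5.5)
We begin by writing the conditional oracle-action distribution as a finite mixture, using the law of total probability over the configuration classes. Since the oracle is deterministic, $\mathbf{a}=\mu_{\pi^O}(s)$, so all randomness in $\mathbf{a}$ given $\mathbf{o}_i$ comes from the posterior over $s$. Partitioning by the classes $\{\mathcal{S}_j\}$ and invoking the local Gaussian assumption gives
\begin{equation*}
p_{\mathcal{D}}(\mathbf{a}\mid\mathbf{o}_i)=\sum_{j=1}^C w_j\,\mathcal{N}(\mathbf{a};\boldsymbol{\mu}_j,\sigma_j^2\mathbf{I}).
\end{equation*}
The remaining work is to show that a Gaussian mixture with well-separated means has several separated high-density regions.

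Our plan is to compare the density at each component mean with the density at points far from every mean. First, for each $j$ we lower-bound the density at $\boldsymbol{\mu}_j$ by keeping only the $j$-th term: $p(\boldsymbol{\mu}_j\mid\mathbf{o}_i)\ge w_j(2\pi\sigma_j^2)^{-d/2}$, where $d=2$ is the action dimension. Next, let $\mathcal{M}$ be the set of points whose distance to every $\boldsymbol{\mu}_l$ is at least $\Delta/2$; this set contains, for example, the midpoint of the segment joining any two means, except possibly near a third mean. At any $\mathbf{a}\in\mathcal{M}$, every term is bounded by $w_l(2\pi\sigma_l^2)^{-d/2}\exp(-\Delta^2/(8\sigma_l^2))$. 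These terms vanish exponentially as $\Delta/\sigma_{\max}\to\infty$. Hence
\begin{equation*}
\sup_{\mathbf{a}\in\mathcal{M}}p(\mathbf{a}\mid\mathbf{o}_i)\le(2\pi\sigma_{\min}^2)^{-d/2}e^{-\Delta^2/(8\sigma_{\max}^2)},
\end{equation*}
which is much smaller than $\min_j w_j(2\pi\sigma_{\max}^2)^{-d/2}$ once $\Delta\gg\sigma_{\max}$. The balls $B(\boldsymbol{\mu}_j,\Delta/2)$ are disjoint and each contains a point of high density, while any path between two of them must cross the low-density region $\mathcal{M}$. This yields at least $C\ge2$ separated high-density regions.

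To make ``multi-modal'' precise in the sense of local maxima, we additionally show that each ball $B(\boldsymbol{\mu}_j,r)$ with $r=c\sigma_j$ contains a local maximum. On the sphere $\|\mathbf{a}-\boldsymbol{\mu}_j\|=r$, the $j$-th term has dropped by a factor $e^{-c^2/2}$ relative to its value at the center, and the other terms are exponentially small. So for a fixed $c$ and large $\Delta/\sigma_{\max}$, the maximum of the continuous density over the closed ball is attained in its interior, and is therefore a local mode. Disjointness of the balls then gives $C$ distinct modes.

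The main obstacle is making ``$\gg$'' quantitative. The separation must also absorb the weight ratio $\max_l w_l/\min_j w_j$ and the variance ratio $\sigma_{\max}/\sigma_{\min}$. If we cannot assume these are bounded, we will state the explicit sufficient condition $\Delta^2\ge 8\sigma_{\max}^2\log\bigl(C\,\max_l w_l\,\sigma_{\max}^d/(\min_j w_j\,\sigma_{\min}^d)\bigr)+O(\sigma_{\max}^2)$ and interpret the hypothesis as meaning this condition holds.
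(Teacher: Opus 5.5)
Your proposal is correct and follows the paper's route: marginalize the deterministic oracle action over the posterior $p(s\mid\mathbf{o}_i)$, split by configuration class to obtain the Gaussian mixture, and then use the separation $\Delta\gg\sigma_{\max}$ to conclude multi-modality. Where the paper simply asserts that the components have ``negligible pairwise overlap,'' you make that last step quantitative, which strengthens the paper's argument rather than replacing it: you compare the density at each $\boldsymbol{\mu}_j$ with the exponentially small density on the gap region $\mathcal{M}$, locate interior local maxima in disjoint balls of radius $c\sigma_j$, and give an explicit separation condition that absorbs the weight and variance ratios.
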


\begin{proof}[Proof sketch]
Because the oracle policy is deterministic, conditioning on the global state
$s$ fixes the action, i.e.
$p_{\mathcal{D}}(\mathbf{a}\mid s)=\delta\!\big(\mathbf{a}-\mu_{\pi^O}(s)\big)$.
Marginalizing the hidden global state out of the posterior $p(s\mid\mathbf{o}_i)$ gives
\[
    p_{\mathcal{D}}(\mathbf{a}\mid\mathbf{o}_i)
    =\int p_{\mathcal{D}}(\mathbf{a}\mid s)\,p(s\mid\mathbf{o}_i)\,ds .
\]
Since $\{\mathcal{S}_j\}_{j=1}^{C}$ partitions the support of the posterior, we
split the integral class by class and factor out the class weights
$w_j=\int_{\mathcal{S}_j}p(s\mid\mathbf{o}_i)\,ds$:
\[
    p_{\mathcal{D}}(\mathbf{a}\mid\mathbf{o}_i)
    =\sum_{j=1}^{C} w_j\,
    p_{\mathcal{D}}(\mathbf{a}\mid s\in\mathcal{S}_j,\,\mathbf{o}_i),
\]
where each term is the action distribution induced by the states of a single
configuration class. By assumption these class-conditional distributions are approximately $\mathcal{N}(\boldsymbol{\mu}_j,\sigma_j^2\mathbf{I})$, which yields the mixture
\begin{equation}
    p_{\mathcal{D}}(\mathbf{a}\mid\mathbf{o}_i)
    \approx
    \sum_{j=1}^{C} w_j\,
    \mathcal{N}(\mathbf{a};\boldsymbol{\mu}_j,\sigma_j^2\mathbf{I}).
    \label{eq:multimodal}
\end{equation}
Under the separation condition $\Delta\gg\sigma_{\max}$, the components have
negligible pairwise overlap, so each $\boldsymbol{\mu}_j$ corresponds to a
distinct high-density region and the mixture is multi-modal.
\end{proof}

\begin{corollary}[Mode averaging under deterministic distillation]
\label{cor:mode_averaging}
Consider the two-mode special case of~\eqref{eq:multimodal},
\[
    p_{\mathcal{D}}(\mathbf{a}\mid\mathbf{o}_i)
    =
    w\,\mathcal{N}(\mathbf{a};\boldsymbol{\mu}_1,\sigma^2\mathbf{I})
    +(1-w)\,\mathcal{N}(\mathbf{a};\boldsymbol{\mu}_2,\sigma^2\mathbf{I}),
\]
where $\Delta=\|\boldsymbol{\mu}_1-\boldsymbol{\mu}_2\|$ and $w\in(0,1)$. The MSE-optimal deterministic student is the conditional mean
\begin{equation}
    f^*(\mathbf{o}_i)
    =
    w\boldsymbol{\mu}_1+(1-w)\boldsymbol{\mu}_2 .
    \label{eq:mode_avg}
\end{equation}
Let $m=\min(w,1-w)$. Then the distance from $f^*(\mathbf{o}_i)$ to its nearest mode is
\[
    \min_{j\in\{1,2\}}
    \|f^*(\mathbf{o}_i)-\boldsymbol{\mu}_j\|
    =
    m\Delta .
\]
Therefore, the mixture density at the deterministic prediction satisfies
\begin{equation}
\begin{aligned}
    p_{\mathcal{D}}\!\left(f^*(\mathbf{o}_i)\mid\mathbf{o}_i\right)
    &\leq
    (2\pi\sigma^2)^{-d_a/2}
    \exp\!\left(
    -\frac{m^2\Delta^2}{2\sigma^2}
    \right).
\end{aligned}
\label{eq:density_bound}
\end{equation}
Thus, in the well-separated regime $m\Delta\gg\sigma$, the density assigned to the deterministic prediction decays exponentially in $(m\Delta/\sigma)^2$. Consequently, the conditional mean falls in an exponentially low-density region between valid oracle actions.
\end{corollary}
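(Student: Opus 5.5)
The plan is to prove the three claims in order: the form of the MSE-optimal student, the distance to the nearest mode, and the density bound. Each follows from a short direct computation on the two-component mixture given in the statement, which is the $C=2$ case of~\eqref{eq:multimodal}.

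\emph{Conditional mean.} A deterministic student is a measurable map $f$ from observations to actions. Its MSE risk is $\mathbb{E}_{\mathbf{o}_i}\,\mathbb{E}_{\mathbf{a}\sim p_{\mathcal{D}}(\cdot\mid\mathbf{o}_i)}\|\mathbf{a}-f(\mathbf{o}_i)\|^2$. The risk decouples across $\mathbf{o}_i$, so I minimize pointwise. For a fixed $\mathbf{o}_i$, the bias--variance identity gives
\[
\mathbb{E}\|\mathbf{a}-\mathbf{b}\|^2=\mathbb{E}\|\mathbf{a}-\bar{\mathbf{a}}\|^2+\|\bar{\mathbf{a}}-\mathbf{b}\|^2,
\qquad \bar{\mathbf{a}}=\mathbb{E}[\mathbf{a}\mid\mathbf{o}_i].
\]
The minimizer is therefore $\mathbf{b}=\bar{\mathbf{a}}$; the second moment is finite because both components are Gaussian. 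By linearity of expectation over the mixture, $\bar{\mathbf{a}}=w\boldsymbol{\mu}_1+(1-w)\boldsymbol{\mu}_2$, which is~\eqref{eq:mode_avg}.

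\emph{Distance to the modes.} Subtracting each mean gives
\[
f^*-\boldsymbol{\mu}_1=(1-w)(\boldsymbol{\mu}_2-\boldsymbol{\mu}_1),
\qquad
f^*-\boldsymbol{\mu}_2=w(\boldsymbol{\mu}_1-\boldsymbol{\mu}_2).
\]
The two distances are therefore $(1-w)\Delta$ and $w\Delta$, and their minimum is $m\Delta$. A consequence I will reuse is that both distances are at least $m\Delta$.

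\emph{Density bound.} Evaluate each component at $f^*$:
\[
\mathcal{N}(f^*;\boldsymbol{\mu}_j,\sigma^2\mathbf{I})=(2\pi\sigma^2)^{-d_a/2}\exp\!\big(-\|f^*-\boldsymbol{\mu}_j\|^2/(2\sigma^2)\big),
\]
where $d_a$ is the action dimension. Since $\|f^*-\boldsymbol{\mu}_j\|\ge m\Delta$ for both $j$, each component is at most $(2\pi\sigma^2)^{-d_a/2}\exp(-m^2\Delta^2/(2\sigma^2))$. The mixture is a convex combination with weights $w$ and $1-w$, so it is bounded by the larger component value, which gives~\eqref{eq:density_bound}.

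For the interpretive conclusion, I compare this bound to the density near a mode, which is at least $w_j(2\pi\sigma^2)^{-d_a/2}$. The ratio of the two is of order $\exp(-(m\Delta/\sigma)^2/2)/m$, which is exponentially small when $m\Delta\gg\sigma$.

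I do not expect a genuine obstacle here; the only points needing care are bookkeeping. First, the MSE optimum is taken over all measurable functions of $\mathbf{o}_i$, not over a restricted network class, so the pointwise minimization is legitimate. Second, $d_a$ must be recorded as the action dimension ($d_a=2$ in our domains). Third, the bound uses the distance to the \emph{farther-than-$m\Delta$} components uniformly, rather than only the nearest one.
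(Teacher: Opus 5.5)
Your proposal is correct and matches the paper's argument, which is the direct computation built into the corollary statement itself: the conditional mean as MSE minimizer, the distances $(1-w)\Delta$ and $w\Delta$, and bounding both Gaussian components via the nearest-mode distance $m\Delta$. One minor tightening for your interpretive ratio: compare against the heavier mode ($w_j\ge 1/2$) rather than using $w_j\ge m$, which removes the $1/m$ factor that could otherwise undercut the smallness claim when $m$ is tiny.
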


\begin{remark}[Diffusion policies as conditional density models]
\label{rem:diffusion_convergence}
The denoising objective in~\eqref{eq:loss_diff} trains the model to estimate the conditional score of the noised action distribution $q_k(\mathbf{a}^{(k)}\,|\,\mathbf{c})$
\[
    \hat{\boldsymbol{\epsilon}}_\theta^*
    (\mathbf{a}^{(k)},k,\mathbf{c})
    =
    -\sqrt{1-\bar{\alpha}_k}\,
    \nabla_{\mathbf{a}^{(k)}}
    \log q_k(\mathbf{a}^{(k)}\mid\mathbf{c})
\]
in the ideal limit of infinite data and model capacity \cite{vincent2011connection,ho2020denoising}. With the exact conditional score, the ideal reverse diffusion process samples from the full distribution $p_{\mathcal{D}}(\mathbf{a}\,|\,\mathbf{o}_i)$\cite{song2021scorebased}, avoiding the mode-averaging collapse of Corollary~\ref{cor:mode_averaging}. Hence, for the multi-modal distribution in Proposition~\ref{prop:info_asymmetry}, diffusion sampling can generate actions concentrated near individual modes instead of averaging across modes. While practical implementations use finite data, finite-capacity networks, and a finite number of denoising steps, they retain the key advantage of modeling a conditional action distribution rather than a single conditional mean.
\end{remark}

\section{Experiments}
\label{sec:experiments}

\subsection{Evaluation Environments}
\label{sec:environments}
We evaluate CoDiMAD on three cooperative multi-robot tasks, as shown in Fig.~\ref{fig:environments}. All tasks are implemented in a $200\!\times\!200$ continuous arena with $N\!=\!3$ agents, randomized obstacles, and 200-step episodes. The observation encoding is summarized in Table~\ref{tab:obs_channels}.

\begin{figure}[t]
	\centering
	\includegraphics[width=\linewidth]{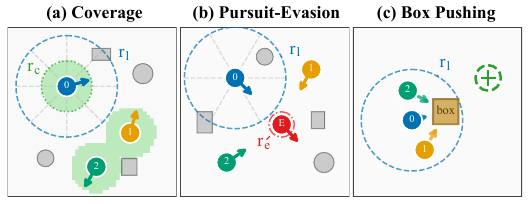}
	\caption{\textbf{Overview of the three evaluation environments.} (a)~Coverage: agents cooperatively explore the arena to maximize grid coverage. (b)~Pursuit-Evasion: pursuers cooperate to capture a faster evader. (c)~Box Pushing: agents push a heavy box to the goal region. $r_l$ denotes the LiDAR detection range, $r_c$ denotes the coverage radius, and $r_e$ denotes the capture radius.}
	\label{fig:environments}
\end{figure}

\begin{itemize}
	\item \textbf{Cooperative Coverage}: Agents explore the arena to maximize coverage of the occupancy grid. The reward encourages discovering new cells, penalizes collisions, and grants a bonus upon completing the coverage objective.
	\item \textbf{Pursuit-Evasion}: Pursuers cooperate to capture a single evader controlled by an Artificial Potential Field (APF) policy. The evader has a maximum speed twice that of the pursuers. The reward combines a capture bonus, a distance-shaping term, and a collision penalty.
	\item \textbf{Box Pushing}: Agents cooperatively push a heavy box toward a randomly placed goal zone. The reward encourages box displacement toward the goal and simultaneous multi-agent contact, while penalizing collisions.
\end{itemize}

\subsection{Experimental Setup}

\begin{table*}[t]
	\centering
	\caption{Performance comparison across three cooperative tasks.}
	\label{tab:main_results}
	\renewcommand{\arraystretch}{1.15}
	\resizebox{\linewidth}{!}{%
	\begin{tabular}{lcccccc}
		\toprule
		& \multicolumn{2}{c}{\textbf{Coverage}} & \multicolumn{2}{c}{\textbf{Pursuit-Evasion}} & \multicolumn{2}{c}{\textbf{Box Pushing}} \\
		\cmidrule(lr){2-3}\cmidrule(lr){4-5}\cmidrule(lr){6-7}
		Method & Cov.\ Rate (\%) $\uparrow$ & Collision $\downarrow$ & Cap.\ Rate (\%) $\uparrow$ & Collision $\downarrow$ & Suc.\ Rate (\%) $\uparrow$ & Collision $\downarrow$ \\

		\midrule
		\textit{MAPPO-Oracle}$^\dagger$ & \textit{97.6$\pm$0.3} & \textit{0.42$\pm$0.22} & \textit{99.1$\pm$0.6} & \textit{0.83$\pm$0.15} & \textit{98.2$\pm$1.4} & \textit{1.18$\pm$0.05} \\[3pt]
		
		MAPPO-Local & 83.7$\pm$1.0 & 0.71$\pm$0.22 & 14.2$\pm$1.0 & 1.89$\pm$0.40 & 18.0$\pm$4.3 & \textbf{1.17$\pm$0.13} \\
		BC-RNN & 80.9$\pm$0.9 & 10.37$\pm$1.15 & 90.3$\pm$1.7 & 3.41$\pm$0.48 & 6.5$\pm$2.5 & 10.32$\pm$0.67 \\
		CoDiMAD w/o RNN & 94.8$\pm$0.3 & 0.81$\pm$0.10 & 84.0$\pm$3.5 & 1.41$\pm$0.27 & 68.9$\pm$0.5 & 2.39$\pm$0.27 \\
		\textbf{CoDiMAD (Ours)} & \textbf{95.7$\pm$0.7} & \textbf{0.55$\pm$0.08} & \textbf{90.6$\pm$1.8} & \textbf{0.80$\pm$0.06} & \textbf{72.2$\pm$0.5} & 2.37$\pm$0.20 \\
		\bottomrule
		\multicolumn{7}{l}{\footnotesize $^\dagger$Non-deployable privileged policy (upper bound).}
	\end{tabular}%
	}
\end{table*}

To isolate the individual contributions of privileged distillation, temporal history encoding, and diffusion-based action generation, we compare CoDiMAD against four comparison methods, including one privileged upper bound:
\begin{itemize}
	\item \textbf{MAPPO-Oracle} (Upper Bound): The privileged policy trained with full global state via MAPPO. While non-deployable in practical scenarios, it establishes the performance ceiling.
	\item \textbf{MAPPO-Local}: MAPPO trained exclusively on local observations, representing the performance baseline of communication-free, model-free MARL.
	\item \textbf{BC-RNN}: A deterministic distillation baseline sharing the same recurrent observation encoder as CoDiMAD, but replacing the diffusion head with an MLP trained via MSE regression. This variant isolates the effect of diffusion-based action generation.
	\item \textbf{CoDiMAD w/o RNN}: An ablation that removes the GRU and conditions solely on single-frame observations, isolating the contribution of temporal history encoding.
\end{itemize}

All methods are implemented in PyTorch and trained on an NVIDIA RTX 4090 GPU. The oracle policy is trained for $5$\,M environment steps. For the student, the noise prediction network consists of three residual MLP blocks with hidden dimension $128$, and the observation encoder uses a GRU layer with the same hidden size. The forward diffusion process employs a cosine noise schedule ($s\!=\!0.008$, $\beta \in [10^{-4},\, 0.02]$) with $T\!=\!200$ training steps, while DDIM inference uses $K\!=\!20$ denoising steps. The student network is optimized with AdamW ($\text{lr}\!=\!10^{-4}$) and exponential moving average (with a decay of $0.99$) for $50$ epochs with a batch size $128$.

\subsection{Comparative Results}
\label{sec:results}

We evaluate each method across three random seeds, with each seed evaluated over 200 episodes. We report task-specific success metrics, including coverage rate, capture rate, and box-pushing success rate, together with a collision metric. The collision metric measures the average number of total collisions per episode, including both inter-agent and obstacle collisions. Table~\ref{tab:main_results} summarizes the quantitative results. Bold entries denote the best-performing deployable method for each metric. We organize the analysis into four pairwise comparisons.

\textbf{CoDiMAD vs.\ MAPPO-Oracle.}
Despite relying only on local observations and using no inter-agent communication, CoDiMAD closely approaches the privileged oracle on two of the three tasks. On Coverage, CoDiMAD achieves $95.7\%$ coverage rate compared with the oracle's $97.6\%$, retaining approximately $98\%$ of oracle-level performance while maintaining a comparably low collision count. On Pursuit-Evasion, CoDiMAD reaches a $90.6\%$ capture rate compared with the oracle's $99.1\%$, with similar collision frequency. On Box Pushing, CoDiMAD achieves $72.2\%$ success compared with the oracle's $98.2\%$, which is the largest gap among the three tasks. This gap reflects the tight force-coordination requirements of cooperative manipulation, where small deviations in pushing timing or angle can lead to failure. These results indicate that diffusion-based distillation transfers a substantial portion of the globally-informed cooperative behavior to a sensor-constrained decentralized student.

\textbf{Diffusion vs.\ regression (CoDiMAD vs.\ BC-RNN).}
CoDiMAD consistently outperforms BC-RNN, supporting the motivation developed in Section~\ref{sec:theoretical}. On Coverage, CoDiMAD achieves $95.7\%$ coverage rate compared with BC-RNN's $80.9\%$, corresponding to a $14.8$ percentage-point improvement. At the same time, it reduces collisions from $10.37$ to $0.55$ per episode, an approximately $19{\times}$ reduction. This large collision reduction is consistent with the mode-averaging failure of deterministic regression. For example, when the oracle action distribution contains two valid modes, such as bypassing an obstacle from the left or from the right, BC-RNN may regress to their conditional mean, producing an intermediate action that drives the agent approximately straight toward the obstacle. Such averaged actions do not correspond to any valid coordination mode and therefore substantially increase collision risk (cf.\ Corollary~\ref{cor:mode_averaging}).

On Pursuit-Evasion, the two methods achieve comparable capture rates ($90.6\%$ vs.\ $90.3\%$), but CoDiMAD reduces collisions from $3.41$ to $0.80$, a $4.3{\times}$ improvement. This suggests that BC-RNN can often produce actions sufficient for eventual capture, yet its averaged trajectories are less coordinated and more collision-prone. On Box Pushing, the advantage of CoDiMAD becomes more pronounced: it achieves $72.2\%$ success rate compared with BC-RNN's $6.5\%$. This result suggests that the gap between CoDiMAD and deterministic regression may become larger as task difficulty increases, particularly when success requires precise and sustained multi-agent coordination.

\textbf{Distillation vs.\ direct local RL (CoDiMAD vs.\ MAPPO-Local).}
CoDiMAD substantially outperforms MAPPO trained end-to-end from local observations, validating the privileged distillation paradigm. The gap is especially large on Pursuit-Evasion, where MAPPO-Local achieves only $14.2\%$ capture rate compared with CoDiMAD's $90.6\%$. Pursuit requires coordinated behaviors such as flanking and encirclement, which are difficult to discover from local observations and sparse task rewards alone. On Box Pushing, a similar pattern emerges: MAPPO-Local achieves $18.0\%$ success rate compared with CoDiMAD's $72.2\%$, as cooperative manipulation requires agents to discover pushing configurations that are highly sensitive to relative positioning. On Coverage, MAPPO-Local performs comparatively better, yet CoDiMAD still outperforms it by $12.0$ percentage points with fewer collisions. Across all tasks, the distillation pipeline mitigates the exploration bottleneck by first learning coordinated behavior in the fully-observable setting and then transferring that behavior to a sensor-constrained decentralized student.

\textbf{Temporal history (CoDiMAD vs.\ CoDiMAD w/o RNN).}
Removing the GRU leads to a moderate but consistent performance drop across all three tasks. The effect is most visible on Pursuit-Evasion, where the capture rate decreases from $90.6\%$ to $80.4\%$ and collisions increase from $0.80$ to $1.41$ per episode. On Coverage and Box-Pushing, the gaps are smaller, but still indicate a positive contribution from temporal history. These results suggest that the GRU provides useful contextual information for disambiguating single-frame observations, especially in sequential coordination tasks. 

\subsection{Analysis of Multi-Modal Behavior}
\label{sec:multimodal_analysis}

\begin{figure}[t]
	\centering
	\includegraphics[width=\linewidth]{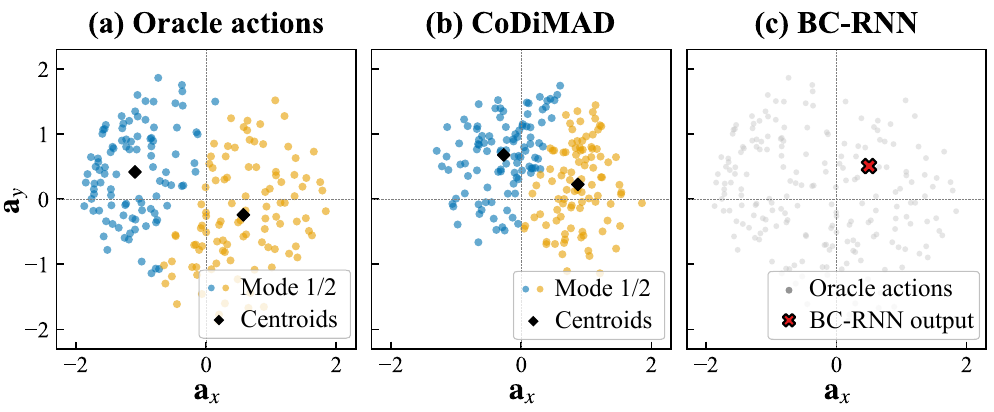}
	\caption{\textbf{Conditional action distributions under partial observability.} Under a fixed local observation $\mathbf{o}_i$ in the Pursuit-Evasion task, the same local percept can be compatible with multiple global states and therefore multiple valid oracle actions. (a)~Oracle actions from the 200 nearest-neighbor observations in the dataset, revealing two distinct coordination modes. (b)~200 independent CoDiMAD samples conditioned on the same observation embedding with different initial diffusion noise, capturing both modes. (c) The deterministic BC-RNN output (red cross) plotted against the oracle distribution (gray), illustrating the mode-averaging behavior, predicted by Corollary~\ref{cor:mode_averaging}.}
	\label{fig:action_dist}
\end{figure}

The theoretical analysis in Section~\ref{sec:theoretical} suggests that partial observability can induce multi-modal conditional action distributions, that deterministic regression yields invalid mode-averaged outputs, and that diffusion sampling recovers individual modes. We empirically examine these claims using action-space visualizations and rollout-level trajectory comparisons.

\subsubsection{Action Distribution Visualization}
To visualize action multi-modality, we select a local observation $\mathbf{o}_i$ from the Pursuit-Evasion dataset whose nearest neighbors in observation space exhibit bimodal oracle actions. This indicate that similar local observations may correspond to different global configurations requiring different cooperative responses. Fig.~\ref{fig:action_dist}(a) shows the oracle actions of the 200 nearest neighbors in the 2D action space, where two clear clusters emerge. Fig.~\ref{fig:action_dist}(b) shows that CoDiMAD, sampled 200 times from the same observation embedding with independent initial diffusion noise, generates actions covering both oracle modes. In contrast, Fig.~\ref{fig:action_dist}(c) shows that BC-RNN produces a single deterministic output located between the two clusters, consistent with the mode-averaging behavior described in Corollary~\ref{cor:mode_averaging}.

\subsubsection{Trajectory Diversity}
We further examine whether action-level multi-modality leads to distinct coordination strategies over time. Starting from identical initial states, we execute five independent rollouts for both CoDiMAD and BC-RNN. As shown in Fig.~\ref{fig:trajectories}, CoDiMAD produces diverse but coherent trajectories across rollouts: agents may cover different regions in Coverage or adopt different approach paths in Pursuit-Evasion. By contrast, BC-RNN produces the same deterministic trajectory across rollouts, reflecting its lack of sampling diversity. These results show that CoDiMAD's stochastic diffusion policy preserves multi-modal behavior over extended horizons, enabling different valid coordination strategies from the same initial condition.

\begin{figure}[t]
	\centering
	\includegraphics[width=0.9\linewidth]{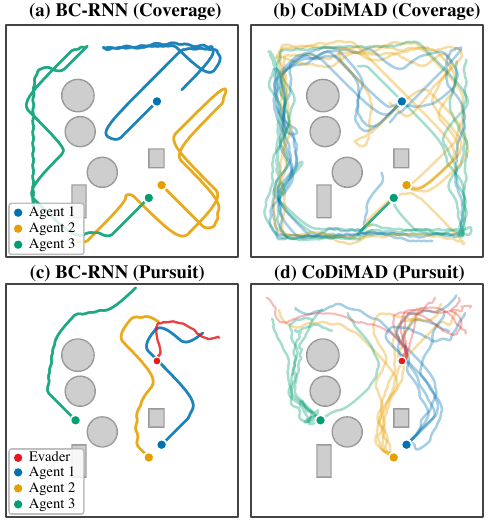}
	\caption{\textbf{Trajectory diversity from identical initial states.} Five rollouts are shown for each method from the same initial condition. (a) and (b): Coverage. (c) and (d): Pursuit-Evasion. CoDiMAD produces diverse but individually coherent trajectories, corresponding to different coordination modes. BC-RNN collapses to a single deterministic trajectory across rollouts.}
	\label{fig:trajectories}
\end{figure}

\section{Conclusion}
\label{sec:conclusion}

We presented CoDiMAD, a three-stage learning framework for communication-free multi-robot coordination. CoDiMAD first trains a privileged MAPPO oracle policy with access to the full global state, and then distills its behavior into deployable student policies parameterized as conditional denoising diffusion models that act solely on local observations. Unlike regression-based distillation, which predicts a single conditional mean action, CoDiMAD parameterizes each student policy as a conditional denoising diffusion model and therefore learns the full conditional action distribution induced by partial observability. This allows each agent to sample actions from coherent coordination modes rather than averaging across multiple valid strategies. Experiments on three multi-agent tasks show that CoDiMAD consistently outperforms both direct local MARL and deterministic distillation baselines, while approaching the performance of the privileged oracle under communication-free decentralized execution. Action-distribution and trajectory visualizations further support our theoretical analysis: local observations can correspond to multi-modal oracle action distributions, CoDiMAD recovers distinct modes through diffusion sampling, and deterministic distillation method tends to produce intermediate mode-averaged actions.

Several limitations suggest directions for future work. First, as an offline distillation method, CoDiMAD may struggle to recover from states outside the oracle's trajectory distribution. Incorporating online fine-tuning with diffusion-compatible RL objectives~\cite{hansen2023idql} could improve robustness under such distribution shifts. Second, scalability beyond $N\!=\!3$ agents remains to be evaluated.  As the number of agents grows, the coordination space may become increasingly complex, motivating more expressive architectures such as attention-based denoisers. Finally, deployment on physical marine robots is an ongoing direction of our work, aiming to validate its applicability in real-world communication-constrained environments.

\balance
\bibliographystyle{IEEEtran}
\bibliography{ref}

\end{document}